\definecolor{Darkblue}{rgb}{0,0,0.4}
\definecolor{Brown}{cmyk}{0,0.81,1.,0.60}
\definecolor{Purple}{cmyk}{0.45,0.86,0,0}
\newcommand{\lref}[2][]{\hyperref[#2]{#1~\ref*{#2}}}
\newtheorem{theorem}{Theorem}[section]
\numberwithin{algorithm}{section}
\newcommand{\junk}[1]{}
\newcommand{\ignore}[1]{}
\newcommand{\RR}[0]{{\ensuremath{\mathbb{R}}}}
\newcommand{\eps}{\varepsilon}
\newcommand{\bs}[1]{\boldsymbol{#1}}
\newcounter{note}[section]
\newcommand{\qedsymb}{\hfill{\rule{2mm}{2mm}}}
\newcommand{\initOneLiners}{%
    \setlength{\itemsep}{0pt}
    \setlength{\parsep }{0pt}
    \setlength{\topsep }{0pt}
}
\newcommand{\squishlist}{
 \begin{list}{$\bullet$}
  { \setlength{\itemsep}{0pt}
     \setlength{\parsep}{3pt}
     \setlength{\topsep}{3pt}
     \setlength{\partopsep}{0pt}
     \setlength{\leftmargin}{1.5em}
     \setlength{\labelwidth}{1em}
     \setlength{\labelsep}{0.5em} } }
\newcommand{\squishend}{
  \end{list}  }
\newcommand{\ip}[1]{\langle #1 \rangle}
\DeclarePairedDelimiterX{\infdivx}[2]{(}{)}{%
  #1\;\delimsize\|\;#2%
}
\title{Improving Length-Generalization in \\ Transformers via Task Hinting}
\author{Pranjal Awasthi \\
Google Research\\
\small{\texttt{pranjalawasthi@google.com}} \\
\and
Anupam Gupta \\
Carnegie Mellon University and Google Research \\
\small{\texttt{anupamg@cs.cmu.edu}} 
}
\newcommand{\redd}[1]{{\color{red}#1}}
\newcommand{\LN}{\text{normalize}}
\newcommand{\Enc}{\mathsf{Enc}}
\newcommand{\Dec}{\mathsf{Dec}}
\newcommand{\s}{\sigma}
\renewcommand{\bs}{\pmb{\sigma}}
\newcommand{\nf}{\nicefrac}
\newcommand{\be}{\mathbf{e}}
\newcommand{\he}{\redd{\widehat{\mathbf{e}}}}
\newcommand{\te}{{\color{blue}\widetilde{\mathbf{e}}}}
\newcommand{\bu}{\mathbf{u}}
\newcommand{\bX}{\mathbf{X}}
\newcommand{\bY}{\mathbf{Y}}
\begin{document}

\maketitle

\begin{abstract}
    It has been observed in recent years that transformers have problems
  with {\em length generalization} for certain types of reasoning and
  arithmetic tasks. In particular, the performance of a transformer
  model trained on tasks (say addition) up to a certain
  length (e.g., 5 digit numbers) drops sharply when applied to longer instances of the
  same problem. This work proposes an
  approach based on {\em task hinting} towards addressing length generalization. Our key idea is that while training the model on task-specific data, it
  is helpful to simultaneously train the model to solve a
  simpler but related auxiliary task as well.

  We study the classical {\em sorting} problem as a canonical example
  to evaluate our approach. We design a multitask training
  framework and show that models trained via task hinting
  significantly improve length generalization. In particular, for sorting we show that it is possible to
  train models on data consisting of sequences having length at most
  $20$, and improve the test accuracy on sequences of length $100$
  from less than $1\%$ (for standard training) to more than $92\%$
  (via task hinting).

  Our study uncovers several interesting aspects of length
  generalization. We observe that while several auxiliary tasks may
  seem natural \emph{a priori}, their effectiveness in improving
  length generalization differs dramatically. We further use probing
  and visualization-based techniques to understand the internal
  mechanisms via which the model performs the task, and propose a theoretical construction
  consistent with the observed learning behaviors of the model. Based
  on our construction, we show that introducing a small number of
  length dependent parameters into the training procedure can further
  boost the performance on unseen lengths. Finally, we also show the
  efficacy of our task hinting based approach beyond
  sorting, giving hope that these techniques will be applicable in
  broader contexts.

\end{abstract}

\section{Introduction}
\label{sec:intro}

Large transformer models trained on massive datasets continue to demonstrate impressive capabilities across a range of tasks in language understanding, image modeling and other domains \citep{radford2019language, brown2020language, chowdhery2022palm, chen2022pali, tu2023towards}. At the same time there is a growing body of work on the limitations and vulnerabilities of such models. This work concerns the {\em length generalization} problem. For many natural tasks---especially ones involving multi-step reasoning such as addition, multiplication, program execution etc.---there is a natural notion of the {\em length} of an input, e.g., the number of digits when performing the addition task \citep{anil2022exploring}. It has been observed %
that the performance of transformers on such tasks drops sharply when tested on instances with lengths not seen during training \citep{nye2021show, zhang2022unveiling, jelassi2023length, abbe2023generalization}. As formalized in \citet{abbe2023generalization} this phenomenon can also be studied as an extreme form of out-of-distribution (OOD) robustness where the support of the test-set distribution is disjoint from that of the training distribution.

Current approaches to tackle length generalization can be broadly divided into two categories. One set of recent works start with a pre-trained large language model (LLM) and investigate fine-tuning/in-context learning for extrapolating to larger lengths. Most notably, the works of \citet{wei2022chain, anil2022exploring} observe that in-context learning strategies such as \emph{chain-of-thought prompting} and \emph{scratchpad prompting} can help improve the out-of-distribution performance of LLMs. Another set of works consider case studies on simpler tasks, and perform task-specific training to improve length generalization \citep{zhang2022unveiling, jelassi2023length, abbe2023generalization}. 

Our work falls into the second category: our goal is to develop general-purpose training techniques to improve length generalization. To put the challenges underlying length-generalization in context, let us list  several natural approaches that do not seem to help. For instance, it was observed in \citet{anil2022exploring} that simply scaling the model and data-set sizes alone does not suffice for length generalization. The authors also observed that while scratchpad prompting helps during in-context learning, fine-tuning a pre-trained LLM on scratchpads does not seem to work, which is surprising. Similarly, the authors in \citet{zhang2022unveiling} observed that while using a pre-trained BERT \citep{devlin2018bert} model helps improve the performance on the LEGO task that the authors introduced in the work, the improvements are limited and not enough to address the problem beyond a certain point. Hence training-time techniques to address the length generalization problem either introduce task-specific architectures \citep{zhang2022unveiling} or perform {\em data priming}/{\em curriculum learning}, where data from higher-length instances is introduced into the training procedure \citep{jelassi2023length, abbe2023generalization}. Interestingly, the authors in \citet{jelassi2023length} observed that while introducing a small amount of training data from instances of length $n$ may help in generalizing to test instances also of length $n$, the model could fail completely on test instances of length $n+1$!

Motivated by the above works, we study whether there are general-purpose training techniques for improving length generalization. As in prior works, we focus on some simple tasks as our use-cases, and consider training transformer models from scratch. For most of the paper we consider the classical problem of {\em sorting} as a canonical example. Given an unsorted sequence of natural numbers of length $n$, we consider training decoder-only transformer models to learn to output the sorted sequence. We work with standard transformer architectures and explicitly refrain from using even a small amount of data from higher length sequences, either during training or for model selection. Our main contribution is the framework of {\em task hinting} for tackling length generalization. Our approach rests on the core idea that as humans, learning to solve a particular task also involves learning to solve simpler useful sub-tasks. For instance, a human student who claims to sort numbers well is also expected to know how to compare two numbers, identify successor/predecessor of a number in a sequence, count the number of occurrences of a number in a sequence and so on.

Hence we propose a multi-task learning framework where the transformer network is trained simultaneously to solve a {\em main} task (such as sorting) and an {\em auxiliary task} (such as identifying the successor element). We show that this approach leads to a powerful framework for improving length generalization. In particular, we demonstrate that by training the model only on data of up to length $n=20$ sequences, one can improve the test accuracy on length $n=100$ sequence from less than $1\%$ (for standard training) to more than $92\%$ (via task hinting). In the second part of the paper we perform a deeper investigation of when and how task hinting helps. We observe that while many auxiliary tasks may seem natural \emph{a priori}, their effect on length generalization varies greatly. For the task of sorting sequences, we observe that the task of identifying the successor element helps the most, while the task of counting helps the least. 

We further use visualization techniques to conclude that for each task the transformer network tends to be biased towards a particular mechanism for solving the task. Perhaps naturally, auxiliary tasks that align well with this bias tend to help the most. We identify certain computational primitives that the network tends to implicitly capture at various layers and propose a theoretical construction of a sorting transformer that is consistent with the empirical findings. Based on our theory we identify a small number of {\em length-dependent} parameters whose introduction into the model boosts the length generalization of transformers significantly (even for models that are trained without task hinting). 
Finally, we demonstrate the effectiveness of our proposed framework for another simple task namely that of incrementing a number.

\section{Related Work}
\label{sec:related}

Length generalization in transformers is a challenging problem, with several confounding factors, such as the role of positional embeddings, architectural choices, and dataset formatting and/or prompting strategies. The works of \citet{dubois2019location, press2021train} propose modifications to the standard attention mechanism to enable length extrapolation. The work of \citet{newman2020eos} observes a surprising role played by the presence/absence of the EOS token. In particular, they observe the models without the EOS token extrapolate significantly better to higher lengths.

The work of \citet{anil2022exploring} explores in-context learning strategies for improving length generalization. The authors show that length generalization can be significantly improved for tasks such as parity and variable assignment by prompting via scratchpads. They also observe certain counter-intuitive behaviors, such as the lack of improvements in length generalization when fine-tuning a model via scratchpad prompts. While it is conceivable that length generalization can be improved via more complex scratchpads/chain-of-thoughts, augmenting a training dataset with such prompts may not always be feasible, and may lead to a significant blow-up of the input context length \citep{malach2023auto}. As another example, the recent work of \citet{liu2023goat} fine-tunes an open source LLaMA model \citep{touvron2023llama} for multi-digit multiplication via scratchpad/chain-of-thought training. It observes that while in-distribution accuracy significantly improves, the resulting models continue to suffer from length generalization.

The work of \citet{zhang2022unveiling} proposes a LEGO task that has a
similar flavor to the task of sorting. The authors observe that when
training a BERT model from scratch for length $n=6$, the
in-distribution accuracy is $100\%$, but the accuracy for
$n=8\ldots12$ is no better than random. Moreover, they show that training a specific architecture, namely the ALBERT
model \citep{lan2019albert}, improves the length
generalization to some extent. 
In \citet{jelassi2023length} the authors propose the
idea of \emph{data priming} for length generalization. This involves
introducing a small amount (less than $1\%$) of the data from higher
lengths (i.e., the test distribution) into the training process to
improve the out of distribution performance. However, the authors
observe that priming a dataset for an unseen length $n$ may not have
any benefits for performance at length $n+1$. In a similar vein, the
authors in \citet{abbe2023generalization} propose a \emph{curriculum
  learning} procedure, where data from higher and higher lengths are
gradually improved into the training procedure.

Our work also involves understanding the internal learning mechanisms of the trained models via simple projection based techniques. In a similar vein, the recent work of \citet{nanda2023progress} studies a depth-one network
trained for addition modulo $113$, using $d=128$-dimensional
representations. Using the structured nature (and limited size) of the
task, they show how zooming in on neurons and analyzing network
weights can help understand the underlying mechanisms.
Another set of recent works (e.g., by \citet{li2022emergent,
  nanda2023emergent}) use \emph{probing} to find mappings from the
internal representations of networks to the actual external state of
the problem (Othello) being solved. The focus of our work is on
showing that broad-spectrum techniques---based on simple projections
onto the embedding and unembedding bases---can result in surprisingly
valuable insights.

Finally, our work uses the framework of multitask learning that has a rich literature \citep{crawshaw2020multi}. Traditionally, multitask learning is used for obtaining good representations that can adapt to new auxiliary tasks using small amounts of additional data. In contrast, in this work we use multitask learning primarily to improve the out-of-distribution robustness of the main task itself.

\section{Sorting}
\label{sec:sorting}

For the majority of the paper we focus on sorting as our canonical example. We consider solving this task via
decoder-only transformer models \citep{brown2020language} trained from
scratch. We work with a vocabulary $\Sigma$ of integers from $1$ to
$100$ and introduce two additional tokens, $\bot$ as the end-of-input
delimiter, and $\text{PAD}$ as a padding token to ensure that all input
sequences during training have the same length. Given an input
sequence, we train a decoder-only causal transformer model to predict
the sorted sequence one token at a time. The training is done via the
standard next-token prediction framework with the cross-entropy loss. See
Figure~\ref{fig:sorting-example} for an example input sequence, and the mask we use to penalize the model only for the output
positions.

\begin{figure}
    \centering
    \includegraphics[width=0.5\textwidth]{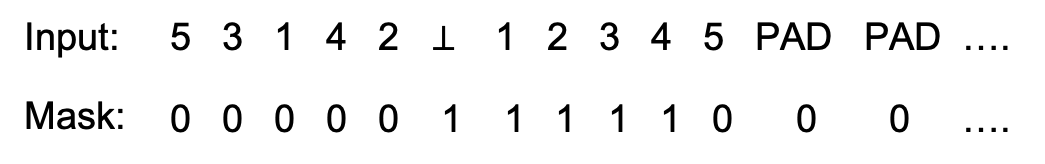}
    \caption{An example input sequence for decoder only model training. The mask ensures that we only penalize the model for predictions at the output positions.}
    \label{fig:sorting-example}
\end{figure}

Our training dataset consists of sequences of lengths up to $20$,
where each sequence is formed by drawing numbers from
$\Sigma = \{1,2, \ldots, 100\}$ uniformly at random with
replacement. Furthermore, to simulate the realistic setting where data
freely available on the internet is biased towards shorter sequence
lengths, we ensure that $80\%$ of the training set consists of
sequence lengths from $\{2,3,4,5\}$, and the remaining $20\%$ consists
of sequence lengths $\{6, 7, \ldots, 20\}$. We first investigate
whether (and by how much) scaling the data and model can help with
length-generalization. To do this, we train a depth-2 model (see Appendix~\ref{sec:hparam} for the hyperparameter settings) with dataset
sizes of $\{1M, 10M, 40M, 160M\}$, and we also train models with
depths $\{2,4,8,12\}$ on a $1M$ training set. All the models are
trained using the Adam optimizer \citep{kingma2014adam} for $100k$
gradient steps\footnote{The in-distribution test accuracy always reaches
  $100\%$ well within the first $100k$ steps.} with a batch size of
$1024$, and a one-cycle cosine learning rate
\citep{loshchilov2016sgdr} starting with the base learning rate of
$1e-5$. (We use the first $10$ epochs for a linear warmup to the base
learning rate.) When evaluating the models, we use greedy
decoding.

The result of data
scaling is shown in \Cref{fig:data_scaling}. While more data helps to some extent---the test accuracy on length $50$ sequences
improves from $64\%$ to close to $90\%$---further scaling does not
help and all the models achieves less than $1\%$ accuracy on length
$100$ sequences. Here test accuracy refers to the fraction of the sequences in the test set ($100k$ examples per sequence length) where the model outputs the correct sorted sequence. Similarly, scaling the depth from $2$ to $4$ helps
improve the accuracy on length $50$ sequences, but we do not observe
any further benefits (\Cref{fig:model_scaling}) thereafter. Again, the
accuracy for length $100$ sequences is less than $1\%$ for all model
and data sizes. This is consistent with the behavior observed in
\citet{nye2021show}: model and data scaling alone does not seem enough
to tackle length-generalization.

\begin{figure}[h]
    \centering
    \begin{minipage}{0.45\textwidth}
        \centering
        \includegraphics[width=0.7\textwidth]{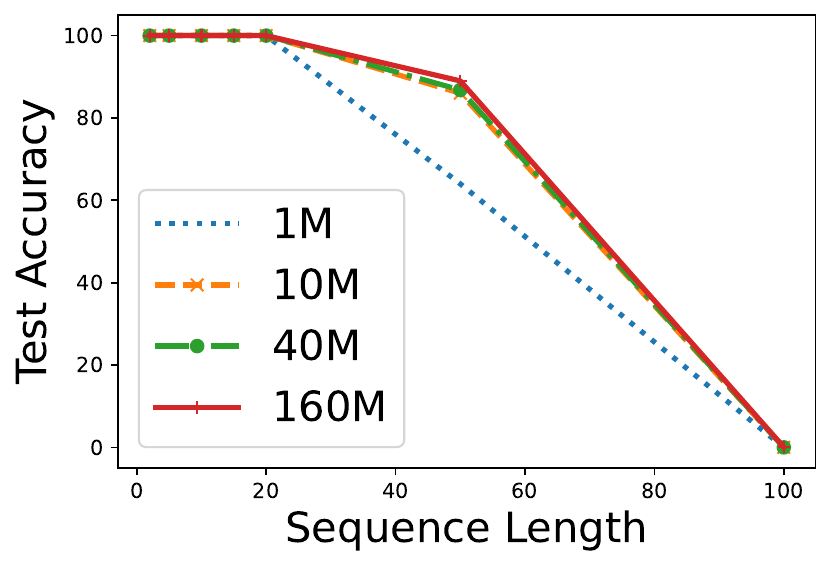} %
        \caption{Effect of data scaling on length generalization. While performance improves on length $50$ sequences, there is no benefit at higher lengths.\label{fig:data_scaling}}
    \end{minipage}\hfill
    \begin{minipage}{0.45\textwidth}
        \centering
        \includegraphics[width=0.7\textwidth]{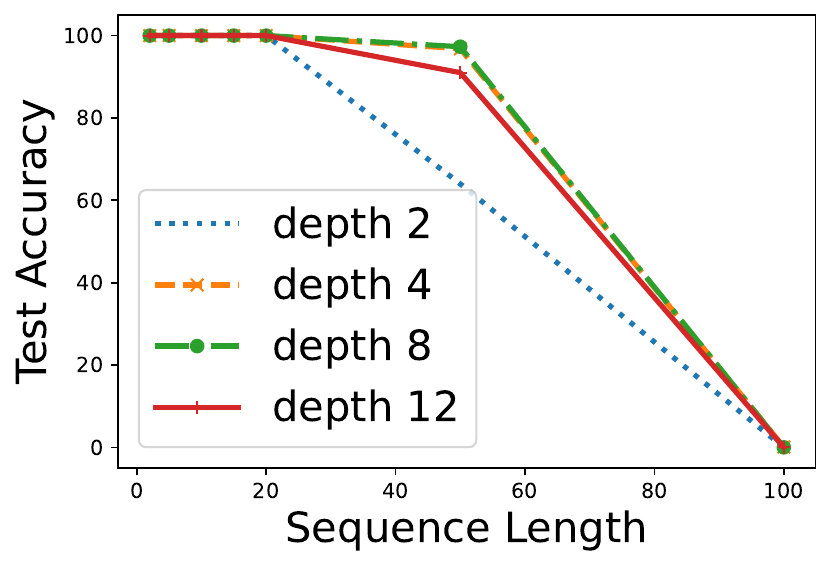} %
        \caption{Effect of model scaling on length generalization. All the models have less than $1\%$ test accuracy for length $100$ sequences.\label{fig:model_scaling}}
    \end{minipage}
\end{figure}

\subsection{Task Hinting}
\label{sec:task-hinting}

We now introduce the framework of \emph{task hinting}. We consider a multi-task setup where we train the model to
simultaneously perform well on the main sorting task
(\Cref{fig:sorting-example}), and also an auxiliary task. This
auxiliary task corresponds to a simpler sub-task associated with
``truly learning'' a solution to the main task. In this section, let us
focus on the {\em successor task}: given an input sequence and a
particular element $a$ from the sequence, the model must learn to
predict its successor, i.e., the element that follows $a$ in the
sorted sequence (see \Cref{fig:sorting-successor-example} for an example). 
.

\begin{figure}[h]
    \centering
    \includegraphics[width=0.5\textwidth]{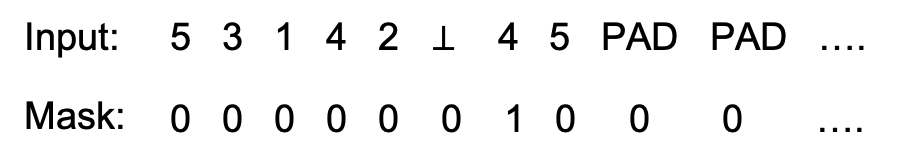}
    \caption{An example input sequence for the successor task.}
    \label{fig:sorting-successor-example}
\end{figure}

In order to jointly learn the two tasks, we use the hard-parameter-sharing
model for multi-task learning \citep{crawshaw2020multi} where the
entire model backbone is shared across the two tasks, and a
task-specific classification head is used at the final layer to make
predictions for the respective tasks. We train the models as before
for $100k$ steps, each time alternating between performing gradient
updates on the main task and auxiliary task. The training
dataset size is split equally among the two tasks.

\begin{figure}[h]
    \centering
    \includegraphics[width=0.4\textwidth]{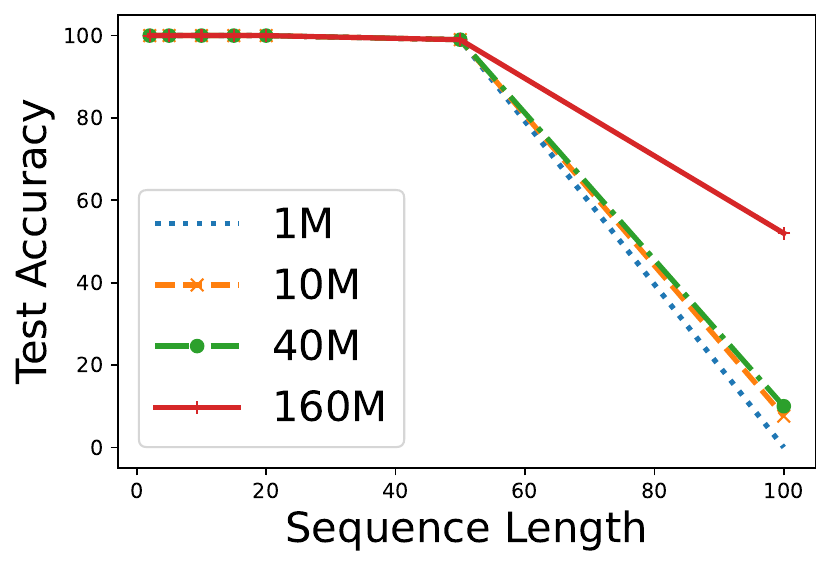}
    \caption{Effect of data scaling for task hinting. We observe consistent improvements in test accuracy on higher length sequences.}
    \label{fig:sorting-hinting-data-scaling}
\end{figure}

\Cref{fig:sorting-hinting-data-scaling} shows the effect of
scaling the training-set size on a depth-2 model with task hinting. In
contrast to the single-task setup, we see consistent gains as the training set size
increases. In particular, for dataset size of $160M$ the test accuracy
for length $100$ reaches to $52.4\%$. Furthermore, by modifying the
training set slightly so that $10\%$ of the sequences involve
non-trivial repetitions (see Appendix~\ref{sec:hparam} for details), the
depth-2 model trained via task hinting achieves $92.6\%$ test accuracy
on length $100$ sequences! In contrast, the model obtained without task hinting continues to have test
accuracy close to $0$ on length $100$ sequences, even on this modified
training set.

\begin{figure}[h]
  \centering
    \begin{minipage}{0.45\textwidth}
        \centering
        \includegraphics[width=0.7\textwidth]{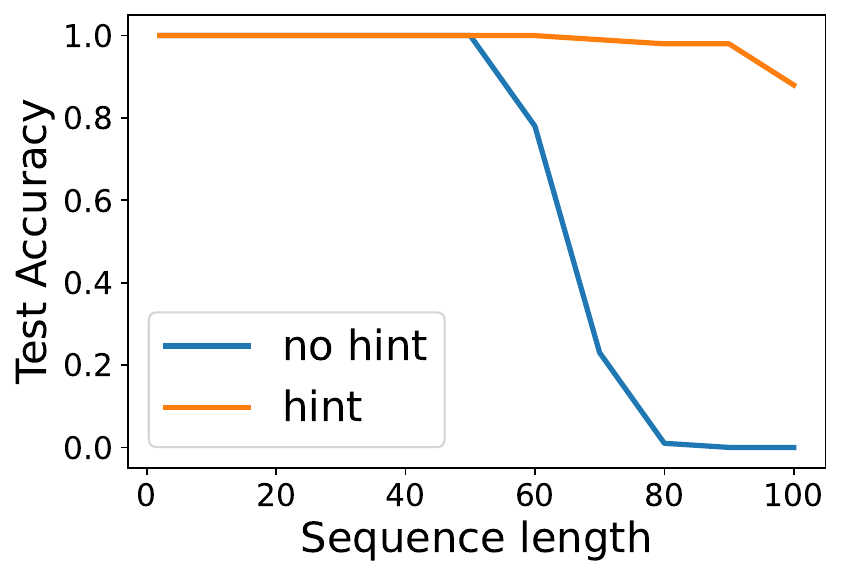} %
        \caption{Comparing test accuracy for hinting vs.\ no hinting
          for increasing sequence lengths; higher is better.        \label{fig:hint_vs_no_hint_rep}}
    \end{minipage}\hfill
    \begin{minipage}{0.45\textwidth}
        \centering
        \includegraphics[width=0.7\textwidth]{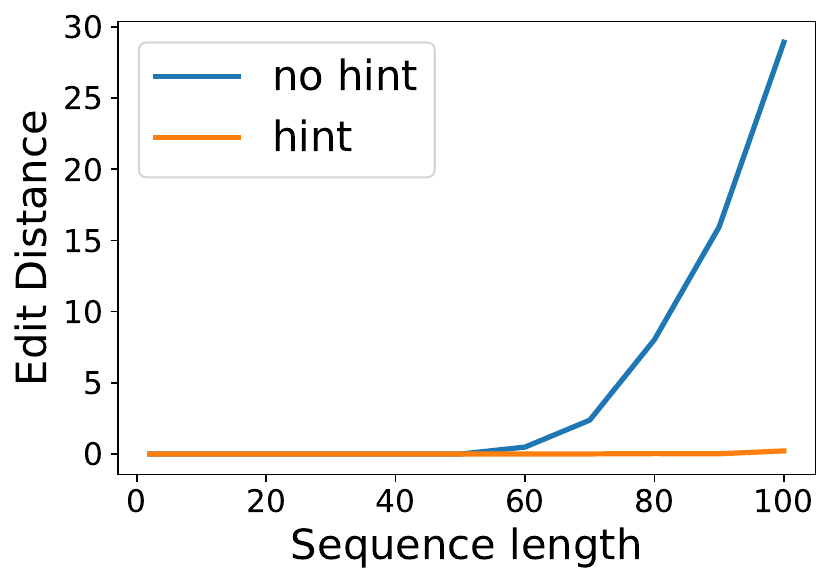} %
        \caption{Comparing the edit distance for hinting vs.\ no
          hinting for increasing sequence lengths; lower is better. \label{fig:hint_vs_no_hint_rep_ed}}
    \end{minipage}
\end{figure}

In \Cref{fig:hint_vs_no_hint_rep,fig:hint_vs_no_hint_rep_ed} we compare the test
performance of the depth-2 model trained on a 160M dataset via task
hinting and the model obtained via standard training, as we increase
the test sequence length. Both the models are trained on the modified
dataset that contains $10\%$ of sequences with non-trivial
repetitions. We look at two metrics: (a) the \emph{full-sequence}
accuracy, i.e., whether the model outputs the entire sorted sequence
correctly, and (b) the \emph{edit distance} between the true sorted sequence
and the predicted sequence. We see the performance of the no-hinting
model drops sharply with sequence length; in contrast, the performance
of the hinting model remains much more stable.

\begin{figure}
  \centering
  \begin{minipage}{0.45\textwidth}
    \centering
    \includegraphics[width=0.7\textwidth]{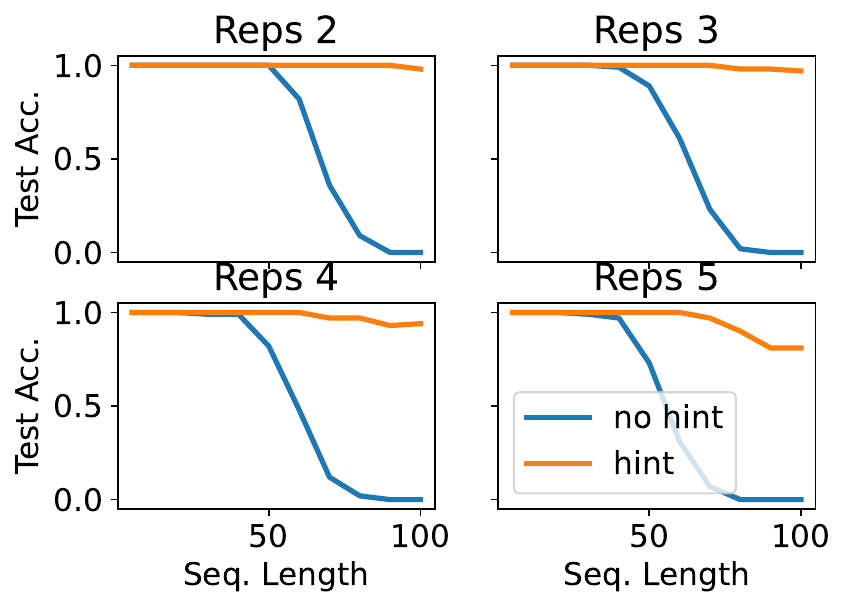} %
    \caption{Test accuracy comparison of hinting vs. no hinting models on repetitions.    \label{fig:data_scaling_rep}}
  \end{minipage}\hfill
  \begin{minipage}{0.45\textwidth}
    \centering
    \includegraphics[width=0.7\textwidth]{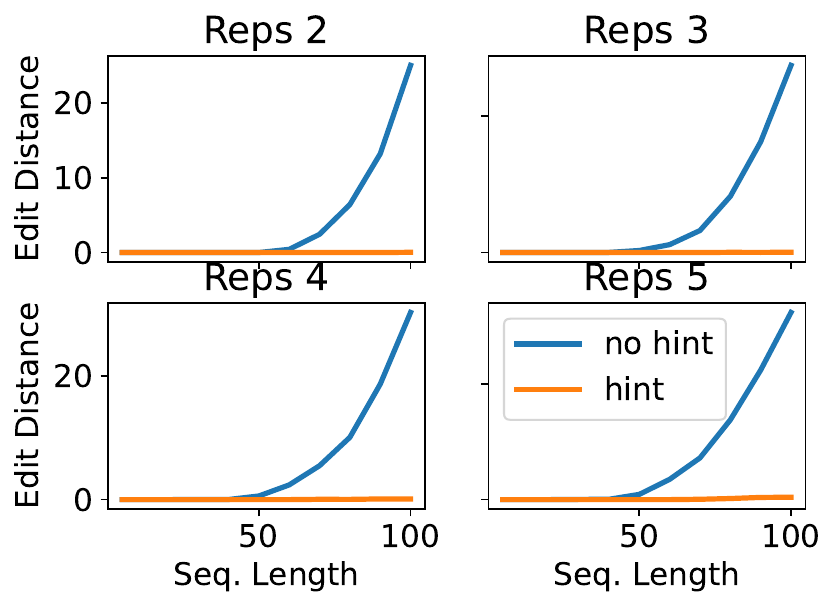} %
    \caption{Edit distance comparison of hinting vs. no hinting on repetitions.    \label{fig:data_scaling_rep_2}}
    \end{minipage}
\end{figure}

To further investigate the robustness of the trained models, we test
them on distributions beyond uniform random sampling. We construct test distributions of the form $\text{rep}(i,r)$, where a
sequence of length $i$ is created by sampling $\lfloor i/r \rfloor$
elements uniformly at random without replacement, and repeating each
$r$ times. (The remaining $i - \lfloor i/r \rfloor r$ elements are
drawn uniformly at random with
replacement.) %
\Cref{fig:data_scaling_rep,fig:data_scaling_rep_2} compare the
performance of the hinting-based and no-hinting models for
repetition values ($r$) in $\{2,3,4,5\}$. Again, we observe that the
hinting-based models are stable in their performance, both in terms of
their full-sequence accuracy and their edit distance.

\noindent \textbf{Alternative Hints.}  Many other natural auxiliary
tasks can serve as hints for the principal task of sorting. In
\Cref{fig:sorting-count-fill-example} we present two such tasks. The
first is a ``\emph{count}'' task where, given a sequence of only
two numbers repeated a certain number of times the model has to
identify the least occurring one. The underlying idea is that sorting
requires producing an output with the correct number of occurrences of any particular
number, and hence understanding whether the output contains fewer or
equal occurrences of a number. A very similar intuition underlies the
second task, which is a ``\emph{fill}'' task: given a sequence
containing a single number repeated some number of times, followed by
a prefix of that sequence, the model has to fill in the remaining
entries.

\begin{figure}[h]
    \centering
    \includegraphics[width=0.8\textwidth]{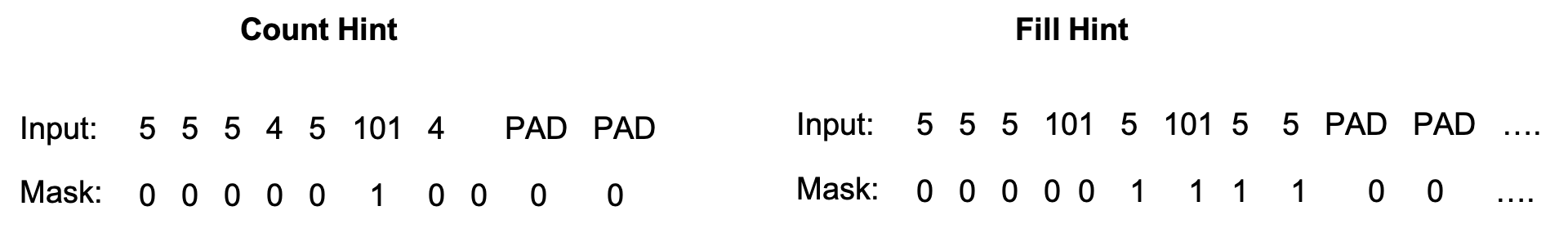}
    \caption{An example input sequence for count hints and fill hints.}
    \label{fig:sorting-count-fill-example}
\end{figure}

We now compare the performance of the models trained via the three
different types of hints---the successor hint from the previous
secton, and these count and fill hints---in
\Cref{fig:sorting-hint-vs-hint}. Observe that the length
generalization varies greatly depending on the type of hint used. In
particular, while the fill hints result in a marginal improvement over
the standard model without hinting, the use of count hints results in
a worse performance than having no hints at all!

\begin{figure}
    \centering
    \includegraphics[width=0.4\textwidth]{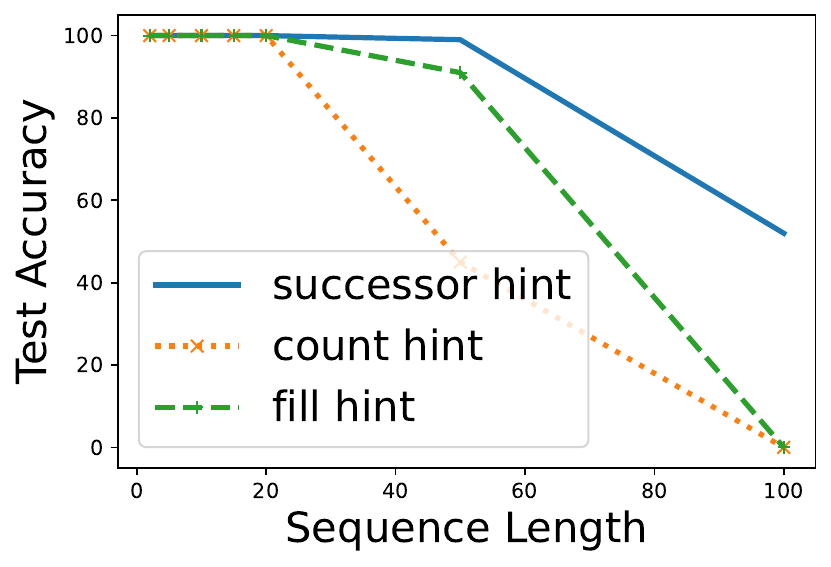}
    \caption{Test accuracy comparison of various hinting tasks. Not all auxiliary tasks lead to improved length generalization, and some (such as counting) leads to performance degradation.}
    \label{fig:sorting-hint-vs-hint}
\end{figure}

\section{Interpreting Hints}
\label{sec:interpretability}

Given the large differences in the performance of models trained with
different kinds of hints, we now turn to visualization and probing
techniques to try and understand the mechanism by which the network
learns the sorting task. To begin with, some notation:
\begin{enumerate}
\item For a given trained model, let $E \in \RR^{q \times d}$ be the
  learned input embedding (usually called the \emph{embedding table});
  here $q$ is the vocabulary size, and $d$ is the embedding
  dimensionality. (In our experiments, $q=103$ and $d = 1024$.) We
  call the rows of $E$ the {\em encoder basis}; the use of the term
  ``basis'' is not unreasonable here, since experimentally we find
  that the rows are nearly orthogonal and of very similar lengths.
\item Let $(W,b)$ denote the classifier used at the last layer to make
  the next-token prediction. Here $W$ is a $d \times q$ matrix
  (usually called the \emph{softmax layer}), and $b$ is the
  \emph{bias} vector of size $d$. We also observe that the columns of $W$ are
  nearly-orthogonal, and we call these vectors the {\em decoder
    basis}. These two bases are nearly orthogonal to each other as
  well, and hence span $2q = 206$ of the $d = 1024$ dimensions.
\end{enumerate}
As the network performs inference on an input
$\bs = \langle \sigma_0, \ldots, \sigma_{T-1}\rangle$, we can compute
the intermediate embeddings for each token $\sigma_i$ in the sequence
and visualize them in the encoder and decoder bases. Formally, a
standard decoder-only transformer model consists of layers of
\emph{attention blocks}, where each attention block consists of a
layer of \emph{self-attention} followed by a layer of
\emph{MLP}. Hence, for a given input $\bs$ and position index $i$, let
$\smash{X^{\text{pre}}_{i,j}}$ denote the embedding of token
$\sigma_i$ obtained at depth $\smash{j^{th}}$  \emph{before}
applying the MLP at that depth, and let
$\smash{X^{\text{post}}_{i,j}}$ the embedding \emph{after} applying
the MLP. We then visualize several positions $i$ for various inputs by
projecting the pre-MLP and the post-MLP embeddings onto the encoder
and the decoder bases. These projections are often insightful, since
the basis vectors naturally correspond to vocabulary symbols.

\begin{figure}[h]
    \centering
    \begin{minipage}{0.9\textwidth}
        \centering
        \includegraphics[width=0.9\textwidth]{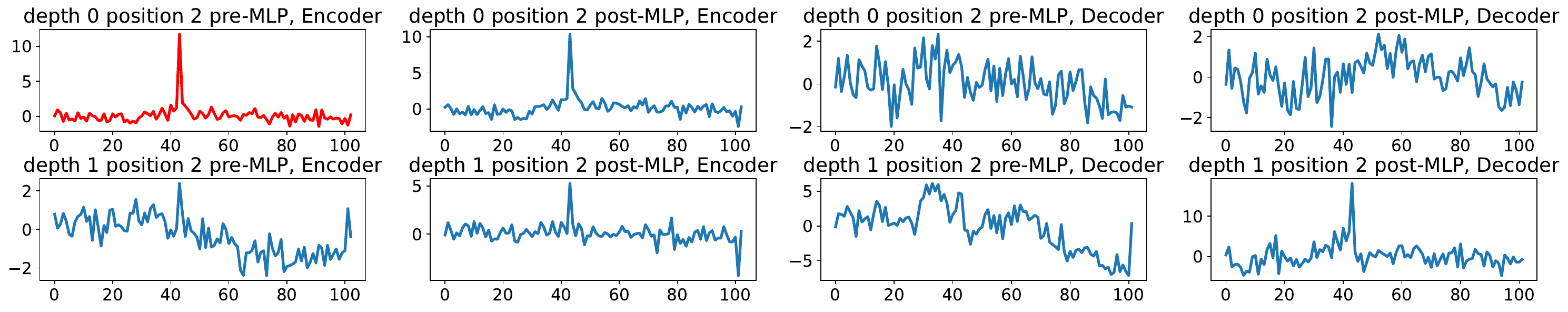} %
        \caption{The projection of token $43$ at position $2$ onto the encoder and the decoder bases. We observe a noisy copy operation being implemented in the encoder basis (see row 1, plot 1 in red).\label{fig:sorting_charts_pos_before}}
    \end{minipage}\hfill
    \begin{minipage}{0.9\textwidth}
        \centering
        \includegraphics[width=0.9\textwidth]{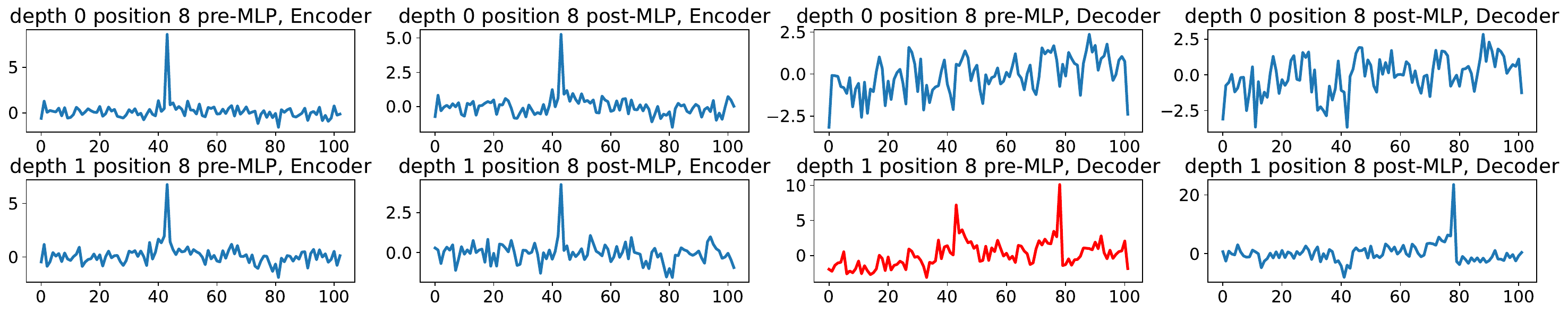} %
        \caption{The projection of token $43$ at position $8$ onto the encoder and the decoder bases. We observe an {\em Identity+Successor} operation being implemented in the decoder basis after the second attention layer (see row 2, plot 3 in red).\label{fig:sorting_charts_pos_after}}
    \end{minipage}\hfill
    \begin{minipage}{0.9\textwidth}
        \centering
        \includegraphics[width=0.9\textwidth]{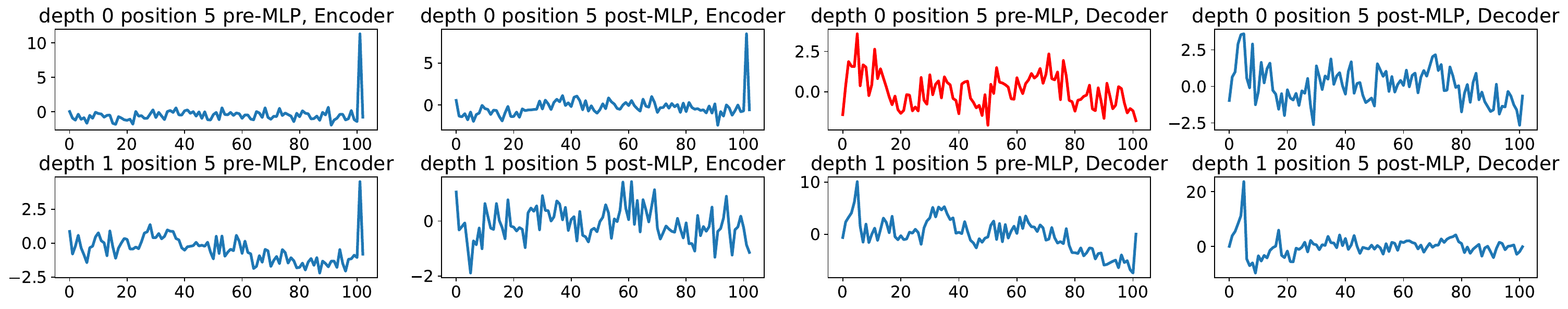} %
        \caption{The projection of token $\bot$ at position $5$ onto the encoder and the decoder bases. We observe a noisy min operation being implemented in the decoder basis (see row 1, plot 3 in red).\label{fig:sorting_charts_pos_length}}
    \end{minipage}    
\end{figure}

As an example consider the input sequence:
$\bs = \langle 5, 17, 43, 78, 92, \bot\rangle$ of five numbers that
have to be sorted. We consider a depth-two trained model (via standard
training) and plot the projected embeddings for token $\sigma_2 = 43$
in \Cref{fig:sorting_charts_pos_before}. The
embeddings after first attention layer (depth-0 pre-MLP) are highly
concentrated on the token $43$ in the encoder basis, suggesting a
(noisy) \emph{copy} operation being implemented by the layer. This
tendency of tokens to simply copy themselves in the encoder basis is
observed for tokens appearing before the $\bot$ token at all points
in the inference.

Next, in \Cref{fig:sorting_charts_pos_after} we plot the token $43$
again, but now when it appears at position $8$, i.e., when it is part
of the output sequence. We again observe the noisy copy operation in
the encoder basis, but the behavior in the decoder basis is quite
different. Specifically, the embedding after the second attention
layer (depth-$1$ pre-MLP) is highly concentrated on both token $43$
\emph{and on its successor} in the sorted sequence, i.e., on token
$78$. In fact, we consistently observe this two-peak phenomenon in the
depth-$1$ pre-MLP embedding for tokens in the output sequence---they
appear to implement an \emph{Identity+Successor} operation. The final
MLP layer then acts as a \emph{denoiser}, reducing/removing the spike
on the identity part to ensure that the final embeddings are
concentrated correctly on the successor element---hence the
classification based on $(W,b)$ correctly outputs the successor
element.

Finally, let us examine the embeddings for the end-of-input $\bot$
token in \Cref{fig:sorting_charts_pos_length}. Here we consistently
observe that a noisy \emph{minimum operation} is being implemented
right after the first attention layer (depth-$0$): the embedding has
largest inner product with the vector in the decoding basis that
corresponds to the minimum element in the input!

To summarize, consider input
$\sigma_0, \ldots, \sigma_{n-1}$, with the $\bot$
token at location $n$. We consistently observe that the
embeddings suggest the following learning mechanism:

\begin{enumerate}[nosep,label=(\roman*)]
\item Any token $\sigma_i$ in position $i < n$ has a sharp spike on
  the encoding basis vector corresponding to symbol $\sigma_i$
  throughout the inference.
\item The embedding for the end-of-input delimiter $\bot$ typically
  implements a noisy minimum operation in the decoding basis after the
  depth-$0$ self-attention later.
\item Any token in position $i > n$ (i.e., part of the output) often
  implements the {\em Identity+Successor} operation after the
  depth-$1$ self-attention layer. The depth-$1$ MLP acts as
  a \emph{denoiser}, removing the spike on the symbol itself, which then
  correctly highlights only the successor.
\end{enumerate}

The empirical evidence suggests that the network aims to solve the
sorting task using a natural algorithm: (a)~first finding the minimum
element to follow the $\bot$ symbol, and thereafter (b)~computing the
successor element for each element. Moreover, this suggests why the
successor hints are highly beneficial: these hints align well with the
solution concepts that the network is trying to learn.  In order to
further validate this hypothesis we compare how effective the internal
representations of these depth-2 models (trained with/without hints)
are at implementing the above-mentioned mechanisms. In particular, we
measure how often:
\begin{enumerate}[nosep,label=(\roman*)]
\item the embedding for the $\bot$ token after the depth-$0$
  self-attention layer computes the minimum input element (this is
  measured by computing the dot-product of the embedding with the
  decoding basis), and
\item the embedding for tokens in the output sequence (those after
  $\bot$) correctly implement the {\em Identity+Successor} mechanism
  after the layer-$2$ attention operation.
\end{enumerate}
\Cref{fig:sorting_acc_101_min_finding,fig:sorting_acc_succ} show that
using successor hints significantly improves the accuracy of these two
mechanisms in the internal representations, especially at lengths not
seen during training. We conjecture that in general, auxiliary tasks
that align well with the implicit bias of the network tend to help the
most to obtain out-of-distribution robustness.

\begin{figure}
    \centering
    \begin{minipage}{0.45\textwidth}
        \centering
        \includegraphics[width=0.7\textwidth]{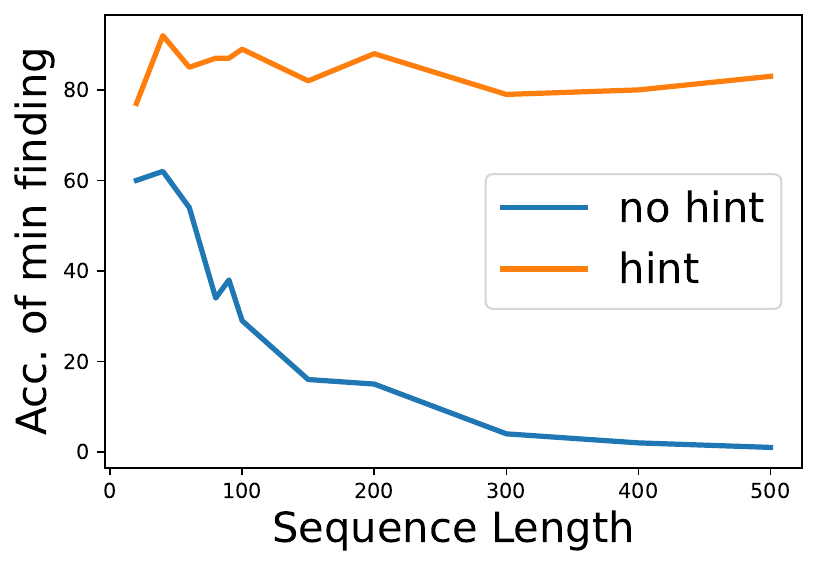} %
        \caption{The accuracy of implementing the min finding operation after layer-$1$ attention.\label{fig:sorting_acc_101_min_finding}}
    \end{minipage}\hfill
    \begin{minipage}{0.45\textwidth}
        \centering
        \includegraphics[width=0.7\textwidth]{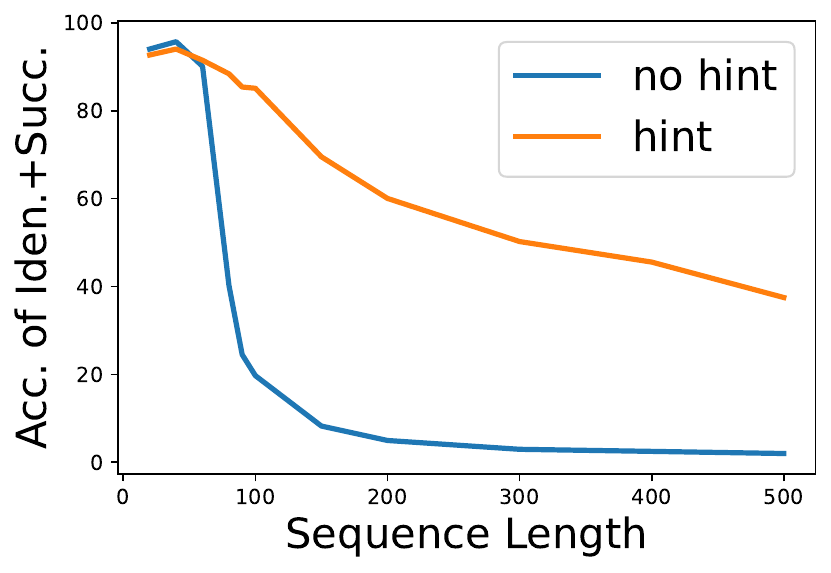} %
        \caption{The accuracy of implementing the Identity+Successor operation after layer-$2$ attention.\label{fig:sorting_acc_succ}}
    \end{minipage}    
\end{figure}

Our analysis above shows that direct projection-based techniques can
help demystify some algorithmic mechanisms underlying transformer
networks, and provide interesting insights. Moreover,
the generality of the techniques gives hope that they can
be used for other large-scale problems.

\section{Theoretical Analysis}
\label{sec:theory}

The previous sections relied on the toolkit of visualization and
probing using the encoder/decoder bases to gather empirical evidence
about the learned mechanism, and the effectiveness of the successor
finding task. In this section we ask the questions: \emph{can we give
  a theoretical construction that matches the empirical findings, and
  that can be implemented via a shallow transformer model? What does
  this construction tell us about length generalization?} Recent
theoretical works have alluded to the possibility that log-precision
transformers may capture the complexity class of $TC^0$
circuits~\citep{merrill2022saturated}. Since \citep{ChandraSV84} 
show that sorting is indeed in $TC^0$, it is conceivable that one can
design constant-depth transformer models for sorting.

While there may be many such constructions of shallow transformer
models, we impose some additional constraints: (a)~we ask for a
depth-two model, and (b)~the size of the network should be independent
of the input length $n$, even though the parameters could depend
logarithmically on $n$. Finally, we want a construction that displays
the empirical properties we observe in
Section~\ref{sec:interpretability}. We hope that by getting a
theoretical construction that is close to the empirically observed
behavior, we may be able to generate more practically useful insights
from the theory.

Formally, we fix an alphabet $\Sigma$ of size $q$. We have one special
symbol $\bot$, which is the end-of-sequence delimiter. Let $\Sigma'$
denote the extended alphabet $\Sigma \cup \{\bot\}$. We associate
$\Sigma$ with the naturals $\{1, 2, \ldots, q\}$, with the usual total
order on them. Since we seek to sort sequences using next-token
prediction, the input is a sequence of length $T$ consisting of three
conceptual parts:
\begin{enumerate}[nosep]
\item pre-delimiter: a sequence $\s_0, \s_1, \ldots, \s_{n-1}$ where each $\s_i \in \Sigma$. These
  represent the unsorted input. 
\item the end-of-sequence delimiter: $\s_{n} = \bot$.
\item post-delimiter: a sequence of $i = T-n-1$ symbols
  $\s_{n+1}, \s_{T+2}, \ldots, \s_{T-1}$ from $\Sigma$, which
  ideally represent the smallest $i$ symbols in the input
  $\bs_{[0:n-1]}$ (in non-decreasing order).
\end{enumerate}
Given this sequence $\bs$ we want to predict the next symbol in the
sorted order of the input $\bs_{[0:n-1]}$. Finally, we consider
transformers with the tempered softmax operation, i.e., given
$x \in \mathbb{R}^d, \text{softmax}_\tau (x)_i = e^{\tau x_i}/\sum_j
e^{\tau x_j}$. In our construction, we consider transformer models
where $\tau = \beta \ln n$ and $\beta$ is a tunable/learnable
parameter, and $n$ is the sequence length. This is a departure from
the standard practice of always setting $\tau=1$, \emph{independent of
  the input length}. We prove the following theorem:
\begin{theorem}
    \label{thm:main}
    For any alphabet of size $q$ and bit precision complexity $b$, there exists a depth-2 decoder only transformer model with two attention heads, embedding dimensionality and hidden layer dimensionality of $O(q)$, and network weights encoded using $b$ bits of precision that correctly solves the sorting task on sequence of length up to $2^{\Omega(b)}$. Furthermore, the network displays the following characteristics:
    \begin{enumerate}[nosep]
        \item For any position $i < n$, the embedding obtained after the first attention layer is highly concentrated on $\sigma_i$ in the encoding basis, hence implementing a {\em copy} operation.
        \item For token $\bot$, the embedding after the first attention layer has the highest dot product (in the decoding basis) with the smallest element in the sequence, hence implementing the {\em min} operation.
        \item For any position $i > n$, the embedding obtained after the second attention layer is concentrated (in the decoding basis) on the token at position $i$ and the next largest element in the sorted sequence, thereby implementing the {\em Identity\&Successor} operation.
    \end{enumerate}
    
\end{theorem}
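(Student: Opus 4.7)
The plan is to construct an explicit depth-2 transformer, layer by layer, whose intermediate states realize the three mechanisms identified empirically in Section~\ref{sec:interpretability}. First I would fix an embedding of dimensionality $O(q)$ organized into blocks: an $O(q)$-sized encoder block carrying a one-hot indicator of the token identity; an $O(q)$-sized decoder block, initially zero, serving as a write-only scratchpad so that the final classifier $(W,b)$ reads off the predicted token; a small constant-size block of positional markers (notably one-bit flags for ``is this the $\bot$ delimiter?'' and ``is this an output position?'', the latter obtainable from a causal prefix-count of $\bot$); and an $O(q)$-sized block reserved for a thermometer representation $\phi(\sigma)\in\{0,1\}^q$ with $\phi(\sigma)_j=\mathbf{1}[j\le \sigma]$. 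The encoder and decoder bases used in the probing analysis are then literally the standard basis vectors of the first two blocks.

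Second, I would design the first attention block with two heads writing into orthogonal subspaces. Head~1 is the \emph{copy head}: a position-only query-key self-attends at every pre-delimiter position, so the residual preserves $\sigma_i$ in the encoder basis. Head~2 is the \emph{min head}, gated to be active only on the $\bot$ query; it scores input position $k$ by the negation of a linear lift of $\sigma_k$, and with tempered softmax at $\tau=\beta\ln n$ this approximates a hardmax concentrating on $\argmin_k \sigma_k$, whose value map writes the corresponding encoder one-hot into the decoder block. The subsequent MLP performs mild ReLU denoising and, crucially, computes the thermometer features $\phi(\sigma)$ from the encoder one-hot by a single-layer ReLU circuit, so they are available to layer~2.

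Third, the second attention block uses two heads and jointly realizes the Identity\,+\,Successor pattern at every post-delimiter position $i>n$. The \emph{Identity head} attends to the last emitted token and writes its encoder one-hot into the decoder block. The \emph{Successor head}, after lifting $\sigma$ into features $(\sigma,\sigma^{2},1)$ inside key and query (prepared by the layer-1 MLP), produces a bilinear score of the form $-(\sigma_k-\sigma_i-\delta)^{2}$ for a small constant $\delta\in(0,1)$; tempered softmax at $\tau=\beta\ln n$ then concentrates on the $\sigma_k$ closest to $\sigma_i+\delta$, which on the integer grid is precisely the smallest $\sigma_k$ strictly greater than $\sigma_i$, and the head's value map writes this successor's one-hot into the decoder block. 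The layer-2 MLP then acts as the denoiser: a small ReLU gadget subtracts the Identity peak whenever the Successor peak is present, so the classifier outputs the successor.

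The main obstacle I anticipate is the correct handling of multiplicities: if $\sigma_i$ appears more than once in the input and some copies have not yet been emitted, the model must emit $\sigma_i$ again rather than its strict successor. I would address this with a constant-size gadget inside the layer-2 MLP that compares the attention mass the Successor head (with $\delta$ temporarily relaxed to $0$) places on $\{k:\sigma_k=\sigma_i\}$ against a running count of prior emissions of $\sigma_i$ (itself a single attention summation over output positions) and gates between the Identity and Successor peaks accordingly. A secondary issue is book-keeping the approximation error of the tempered softmax: since score gaps on an integer alphabet are $\Omega(1)$, the non-argmax mass is at most $O(n\cdot n^{-\beta})$, which is $o(1)$ whenever $\beta>1$; choosing $\beta$ with $\Theta(b)$ bits of precision lets the additive error propagate across the two layers while remaining below $2^{-\Omega(b)}$, so the construction stays correct for lengths up to $n=2^{\Omega(b)}$ before the precision budget is exhausted.
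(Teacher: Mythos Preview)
Your high-level blueprint (orthogonal encoder/decoder blocks, a min-head at $\bot$, a successor-head for output positions, and an MLP denoiser) matches the paper's, but two steps do not go through as written.

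First, the successor head is incorrect. The score $-(\sigma_k-\sigma_i-\delta)^2$ is maximized at whichever integer is closest to $\sigma_i+\delta$; for $\delta\in(0,1)$ that is either $\sigma_i$ or $\sigma_i+1$, not the smallest input element strictly larger than $\sigma_i$. If the input is $\{3,5,10\}$ and $\sigma_i=5$, your head attends to $5$ (or at best $6$, which is absent) rather than to $10$. A parabolic score cannot by itself enforce the one-sided constraint $\sigma_k>\sigma_i$. The paper avoids this by using an $O(q)$-dimensional query $Q\be_a'=\sum_{b>a}\gamma_b\,\be_b'$ with $\gamma_b=q-b+1$, so that keys with $b\le a$ get zero score and, among $b>a$, smaller $b$ gets exponentially more attention; this directly implements ``smallest element strictly larger than $a$'' with a single bilinear score.

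Second, your multiplicity gadget does not fit in the head budget. You propose that the layer-2 MLP compare (i) the attention mass the successor head (``with $\delta$ relaxed to $0$'') places on copies of $\sigma_i$ against (ii) a running count of prior emissions of $\sigma_i$. But the MLP sees only attention \emph{outputs}, not weights; you cannot run the same head at two different $\delta$ values; and the ``single attention summation over output positions'' you invoke is a third head in layer~2 beyond the two already allocated. The paper's solution is to make one head do double duty: the block-1 MLP tags pre-delimiter occurrences of $a$ with $+\te_a$ and post-delimiter occurrences with $-\te_a$; then block-2 head~1 attends to all positions with symbol $a$ and, via $V\te_a=\he_a$, outputs $\big(n_{a,[0,n)}-n_{a,(n,i]}\big)\he_a$ up to normalization. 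This single vector is the Identity peak when positive and automatically vanishes once all copies of $a$ have been emitted, so the $\eps$-scaled successor peak from head~2 takes over with no extra gadgetry. Replacing your Identity head (which merely re-copies $\sigma_i$) by this signed-count head, and replacing the parabolic score by the monotone $\gamma_b$ query, would repair both gaps within the two-head, $O(q)$-width budget.
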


\noindent \textbf{Algorithmic Implications.} Note that our theoretical
construction relies on the ability to apply length-dependent
tempered-softmax operations. This is important for us to ensure that
the performance of the network does not degrade with increasing
sequence lengths. Given this theoretical construction, we ask whether
incorporating length-dependent tempered-softmax operations suggested
by the theory could help with length generalization in practice. In
order to implement this, we modify the \texttt{Flaxformer} codebase
\citep{flax2020github} to introduce the tempered softmax at each
attention layer (with each its own \emph{learnable} $\beta$
parameter). We train depth-two transformer models on the same training
set of size 160M, both with and without hints, and compare the
performance with and without tempered softmax operations.

\begin{table}[h]
\parbox{.42\linewidth}{
\centering
\begin{tabular}{ccc}
\hline
 &Standard&Tempered\\ & softmax &softmax\\
\hline
 Test Acc.\@ 50 & $89$ & $\mathbf{99.4}$ \\  
 \hline
 Test Acc.\@ 100 & $0.0$ & $\mathbf{45.2}$  \\ \hline
\end{tabular}
\caption{Test accuracy of depth-2 model trained without hints and with/without tempered softmax.\label{tbl:beta-softmax-sorting-1}}
}
\hfill
\parbox{.42\linewidth}{
\centering
\begin{tabular}{ccc}
\hline
 &Standard&Tempered\\ & softmax &softmax\\
\hline
 Test Acc.\@ 50 & $99.2$ &  $\mathbf{99.7}$\\  
 \hline
 Test Acc.\@ 100 & $52.4$ & $\mathbf{64.8}$  \\ \hline
\end{tabular}
\caption{Test accuracy of depth-2 model trained with successor hints and with/without tempered softmax.\label{tbl:beta-softmax-sorting-2}}
}
\end{table}

As we observe in
\Cref{tbl:beta-softmax-sorting-1,tbl:beta-softmax-sorting-2}, the
introduction of the tempered softmax significantly improves length
generalization of models trained via standard training, as well as
those trained via task hinting. Furthermore, the tempered softmax
helps across all data scale ranges. In particular, even for the model
trained without hints on a training set of size 1M, the test accuracy
on sequences of length 100 increases from $0\%$ to $42\%$ due to the
introduction of the tempered softmax!

\section{Task Hinting for Other Problems}
\label{sec:others}

In this section we discuss the effectiveness of our proposed approach
for another problem: that of incrementing a positive integer, i.e., adding $1$ to it. 
As
we will see, it is quite challenging for transformers to be able to
generalize on unseen lengths even for this simple setting. We again
train decoder-only models that produce one token at a time. Similar to
the case of sorting, we use the $\bot$ token to denote the end of the
input sequence. Each example in the training set is a sequence of the
form: $[1, 2, 3, \bot, 4, 2, 1]$, where the output is being produced
in \emph{reverse order}, given that is the way in which humans tend to
solve this task. The training set contains $1M$ instances of lengths
up to $10$. Similar to the case of sorting, we skew the distribution
towards shorter sequences by sampling $80\%$ of the instances from
lengths up to $4$. Finally, we ensure that $10\%$ of the samples end
with a random sequence of $9$s, since these instances are important
for the model to learn the notion of a {\em carry}.

Solving the increment task via a causal decoder-only network presents
a different set of challenges than sorting---the instance is no longer
permutation-invariant, and as the number of output tokens increases,
the model has to attend to a specific position farther to the left in
the input sequence. We compare the length-generalization properties of
models obtained via standard training versus those obtained via either
task hinting or via introducing the tempered softmax operation. For
task hinting, we consider the natural hint of making the model output
the {\em carry} sequence along with the output sequence. Hence an
instance from the auxiliary task will be structured as
\[ [1,2,3,\bot,4,\uparrow,0,2,\uparrow,0,1,\uparrow,0], \] where the
$\uparrow$ token represents the fact that the model should output the
correct carry value at the next step. We train depth-four transformer
models for this task and evaluate their test accuracy on the task of
solving the increment problem correctly.

\begin{table}[h]
\centering
\begin{tabular}{|c|c|c|c|c|c|c|c|c|c|c|}
\hline
$\downarrow$ Model, $\rightarrow$ n&$11$&$12$&$13$&$14$&$15$&$16$&$17$&$18$&$19$&$20$\\ \hline
Standard&$98.2$& $93.8$& $81.5$& $60.1$ &$41$& $23.2$&$10$& $4.1$&$1.4$&$0.3$\\ \hline
Hinting&$99.4$&$96.4$&$88.4$&$69.2$&$47.7$&$27.1$&$13.7$&$6$&$2.2$&$0.5$\\ \hline
Temp. softmax&$99.8$&$97.5$&$91.4$&$78.3$&$62.1$&$46.4$&$29$&$16$&$8$&$4$\\
\hline
\end{tabular}
\caption{Test accuracy comparison of various models on the increment task.\label{tbl:addition}}
\end{table}

\Cref{tbl:addition} compares the performance of the model trained via
standard training to (a)~the model trained via task hinting, and
(b)~the model trained using the tempered softmax. We observe that
while task hinting helps improve length generalization, the
improvements are smaller compared to the improvements for
sorting. However, we observe that the model based on tempered softmax
helps improve the length generalization to a much greater extent.

\section{Discussion and Limitations}
\label{sec:conclusions}

In this work we proposed {\em task hinting} as an effective approach
for the problem of length generalization. We observe that using hints
that have a strong alignment with the internal biases of the learning
mechanism can result in significant gains in out-of-distribution
robustness for the problem of sorting integers. For this setting, we
use probing and visualization-based techniques to investigate the
internal learning mechanisms; these allow us to explain the success of
the successor-based hints that we use in our experiments. In general,
even these probing/visualization approaches may not always be feasible
for large-scale settings, so designing the appropriate hinting tasks
may be a problem in itself: it would be good to develop a principled
approach for deciding on hinting tasks. 

While we observed that other natural hinting tasks, such as the count
task and the fill task did not help (and sometimes even hurt the
performance), we feel that these are useful auxiliary capabilities for
a sorting network, and it would be good to understand their lack of
success at a deeper level. Moreover, it would also be interesting to
combine multiple hints, and make the network benefit from learn more
than two tasks simultaneously. We tried this approach for the sorting
problem, where we trained the model to do well on all the three types
of hinting tasks simultaneously, but observed mixed or even negative
results.

Our work also proposes the introduction of
length-dependent parameters into the attention mechanism, and observe
that they significantly boost the robustness of the models for both
the sorting problem and the increment problem. It would be interesting
to apply this to larger-scale settings of training language models,
and to evaluate whether any gains in robustness can be obtained on
more general reasoning tasks. Finally, when using the framework of
multitask learning to make the network learn both tasks
simultaneously, we did not make efforts to optimize the various
parameters of the setup, and followed a simple recipe of alternating
gradient updates on each task. Further optimizations in this stage
could lead to better performance.

\section{Acknowledgements}
We thank Nishanth Dikkala and Guru Guruganesh for several useful discussions and insightful comments that helped improve the results and the readability of the paper.

\bibliographystyle{unsrtnat}
\bibliography{references}

\appendix
\newpage

\section{Formal Constructions [Proof of Theorem~\ref{thm:main}]}

We now show how to implement a min/successor operation via two-layer transformers, which allows us to sort using next-token predictions.

\subsection{Notation}

Fix an alphabet $\Sigma$ of size $q$. We have one special symbol $\bot$,
which is the end-of-sequence delimiter. Let $\Sigma'$ denote the
extended alphabet $\Sigma \cup \{\bot\}$. We associate $\Sigma$ with
the naturals $\{1, 2, \ldots, q\}$, with the usual total order on them.

Since we seek to sort sequences using next-token prediction, the input
is a sequence of length $T$ consists of three conceptual parts:
\begin{enumerate}
\item the pre-delimiter part: a sequence of $n$ symbols
  $\s_0, \s_1, \ldots, \s_{n-1}$ where each $\s_i \in \Sigma$. These
  represent the unsorted input. 
\item the end-of-sequence delimiter: $\s_{n} = \bot$.
\item the post-delimiter part: a sequence of some $i = T-n-1$ symbols
  $\s_{n+1}, \s_{T+2}, \ldots, \s_{T-1}$ again from $\Sigma$, which
  ideally represent the smallest $i$ symbols in the input
  $\bs_{[0:n-1]}$ (in non-decreasing order).
\end{enumerate}
Given this sequence $\bs$ we want to predict the next symbol in the
sorted order of the input $\bs_{[0:n-1]}$.

\subsection{The Transformer Architecture}

The process works as follows: 
\begin{enumerate}
\item The initial \emph{embedding function} $\Enc: \Sigma' \to \RR^d$ maps
  each symbol to a vector in $\RR^d$. Let $\bX_0 \in \RR^{T \times d}$
  be the embedding of the input sequence $\bs$, where the $i^{th}$ row
  of $\bX_0$ equals $\Enc(\sigma_i)$.  We use $\bX_{ti}$ to denote the
  $i^{th}$ row of $\bX_t$.

\item There are $b$ \emph{attention blocks} which transform this input:
  we denote the
  operation of attention block $t$ by
  $B_t: \RR^{T\times d} \to \RR^{T\times d}$, and hence
  \[ \bX_t := B_t(\bX_{t-1}). \] Each block contains a \emph{self-attention
    layer} and \emph{a multi-layer perceptron (MLP) layer}, followed a
  \emph{layer normalization} operation, such that 
  \[ B_t := \bigg( \LN \circ (I + f^{mlp}_t) \circ (I + f^{attn}_t)
    \bigg). \] The identity maps are the \emph{residual} stream to
  which the results of the various operations get repeatedly added
  in.

\item Each \emph{self-attention layer} consists of $h$
  \emph{attention heads}: the columns of the matrix $\bX_t$ are split
  into $h$ matrices $\bX_{t1}, \ldots, \bX_{th}$, each with $d/h$
  columns and $T$ rows. Each head  is specified by matrices
  $Q,K,V$. It takes a matrix $\overline{X} \in \RR^{T \times
    d/h}$ and produces a matrix of the same dimensions as follows:
  \[ f^{attn}(\overline{\bX}; K,Q,V) := \text{smax}_\tau(\overline{\bX} KQ^\intercal
    \overline{\bX}^\intercal) \overline{\bX} V \] Here the $\text{smax}$ operator takes a matrix
  $A$ and a parameter $\tau$ and defines
  \[ \text{smax}_\tau(A)_{ij} = \frac{ e^{\tau A_{ij}} \, \mathbf{1}_{(i \geq
        j)}}{\sum_{j' \leq j} e^{\tau A_{ij'}}}. \] 
  (Note that the above operator combines the softmax operation and the auto-regressive behavior.) Finally, the resulting
  $h$ sub-matrices are concatenated together to give the result of the
  entire self-attention layer; let $f^{attn}_t$ denote the composite
  function. Define $\bY_t := (I + f^{attn}_t) \bX_{t-1}$ as the result
  of adding back the original signal to the result.

\item Next, the \emph{multi-layer perceptron} (MLP) layer (which in our case
  is a two-layer perceptron) is a transformation
  $f^{mlp}(\bY; W_1, W_2, b_1, b_2)$ that is specified by two matrices
  $W_1, W_2$ and bias vectors $b_1, b_2$. It is the result of applying
  the following map to each row $y^\intercal$ of $\bY$ separately:
  \[ y \mapsto W_2 \, \sigma( W_1y + b_1) + b_2. \] Here the map
  $\sigma(\cdot)$ is usually the component-wise ReLU operation (or in
  more complicated settings, other non-linear operators like GeLU or
  GLU).
  
\item The final piece in each block is the \emph{layer normalization}
  operation, which again is applied to each row of the current
  embedding independently. Given a vector $x \in \RR^d$, it subtracts
  $\mu := \|x\|_1/d$ from each coordinate to make it zero-mean, and
  then divides each entry by $\sigma := \sqrt{\sum_i x_i^2/d}$; this
  makes the Euclidean length $\sqrt{d}$. We denote this operation by
  $\LN$.

\item Unrolling, the entire transformer map is
  \[ \bX_b = \big( B_b \circ B_{b-1} \circ \cdots \circ B_1\big) \, \bX_0. \] 
  
\item The final transformation is the decoding/unembedding operation,
  which takes $\bX_b \in \RR^{T\times d}$ and applies some decoding
  map $\Dec: \RR^d \to \Sigma'$ independently on each row of
  $\bX_b$. This produces characters in $\Sigma'$---these are the
  \emph{predictions} for the next symbols. For our decoder-only
  constructions, the only relevant prediction is that of the last
  symbol: we output this prediction $\Dec(\bX_{b,T-1})$ as the next
  token, thereby increasing the length by $1$---this longer string is
  then the input for the next iteration.
\end{enumerate}

We now show how to implement each of these attention blocks for the
sorting network.

\subsection{The Encoding Function}

Fix a set of unit vectors $\{\be_s, \be_s'\}_{s \in \Sigma'}$ which
are all orthogonal to each other in $\RR^d$. The initial embedding is
simple: each symbol $a \in \Sigma$ is encoded by the vector
$\be_a + \be'_a$, and the end-of-input delimiter is encoded as
$\be_\bot + \be_\bot'$. This gives us the input embedding $\bX_0$.

\subsection{Block \#1}

The first block has two goals: (i)~it gets each token to implement a ``min/copy'' operation (in which the end-of-input delimiter predicts the minimum element from the input, whereas each other token just predicts itself), and (ii)~the tokens corresponding to the same symbol before and after the end-of-input delimiter distinguish themselves, so that the second block can act on them accordingly.

\subsubsection{Block \#1: Self-Attention Layer}

There are two attention heads in the first self-attention layer, each
getting some $d/2$ columns of the matrix $\bX_0$. We denote the
resulting two sub-matrices by  
$\bX_{01}, \bX_{02} \in \RR^{T \times d/2}$, and ensure that
for each $s$, the span of $\{\be_s\}_{s \in \Sigma'}$ lies in the subspace corresponding to the
first $d/2$ coordinates, and the span of $\{\be_s'\}$ lies in the one
for the other $d/2$ coordinates.

In the entire construction, we set $\tau = 3 \ln n$, where $n$ is the
length of the input. Define the $Q, K, V$ matrices for the attention
heads as follows:
\begin{itemize}
\item Attention Head \#1, which operates on a subspace containing the
  vectors $\{ \be_0, \be_1, \be_2, \ldots, \be_q\}$: for some
  positive scalar $C \geq 1$ to be specified below, define
  \begin{alignat}{3}
    Q\be_a &= \be_a + C\be_\bot & \qquad \qquad K\be_a &= \be_a &\qquad
    \qquad V\be_a &= \te_a \\
    Q\be_\bot &= \be_\bot & \qquad \qquad K\be_\bot &= \be_\bot & \qquad \qquad
    V\be_\bot &= \te_\bot.
  \end{alignat}
  (Here, and subsequently, the matrices $Q,K,V$ map all vectors
  orthogonal to the specified vectors to zero.) The vectors
  $\{\te_s\}_{s \in \Sigma'}$ are fresh orthonormal vectors.

\item Attention Head \#2, which  operates on a subspace containing the
  vectors $\{ \be_0', \be_1', \be_2', \ldots, \be_q'\}$: define
  \begin{alignat}{3}
    Q\be_a' &= \be_a' & \qquad \qquad K\be_a' &= \be_a' &\qquad
    \qquad V\be_a' &= \he_a' \\
    Q\be_\bot' &= \sum_{b \in \Sigma} \gamma_b \, \be_b' & \qquad
    \qquad K\be_\bot' &= \be_\bot' & \qquad \qquad V\be_\bot &= 0.
  \end{alignat}
  Here $\{\gamma_b\}_{b \in \Sigma}$ are also values to be specified
  soon. Again, the vectors $\{\he_b'\}_{b \in \Sigma}$ are fresh
  orthonormal vectors.
\end{itemize}
This means that for any symbol $a \in \Sigma$ at some position $i$
before the $\bot$ delimiter, the first attention head outputs
\[ \frac{\sum_{j \leq i: \sigma_j = a} e^\tau \; \te_{\sigma_j} + \sum_{j
      \leq i: \sigma_j \neq a} \te_{\sigma_j}}{\sum_{j \leq i:
      \sigma_j = a} e^\tau + \sum_{j \leq i : \sigma_j \neq a} 1} =
  \frac{n_{a,[0,i]} \, e^\tau \, \te_a + \sum_{j \leq i: \sigma_j \neq a}
    \te_{\sigma_j}}{n_{a, [0,i]} e^\tau + (i+1-n_{a,[0,i]})} \] Here
$n_{a,[x,y]}$ is the number of occurrences of $a$ in the multiset
$\{\sigma_x, \ldots, \sigma_y\}$.  Now since $\tau \geq 3\ln n$, most of
the attention is on all occurrences of the same symbol $a$ seen thus
far, and hence this expression is
\[ (1-O(\nicefrac{1}{n^4})) \cdot \te_a + O(\nicefrac{1}{n^2}) \cdot \bu_{1i},  \]
where $\bu_{1}$ is some ``error'' vector of unit norm. Similarly, the second
attention head gives 
\[ (1-O(\nicefrac{1}{n^4})) \cdot \he_a + O(\nicefrac{1}{n^2}) \cdot
  \bu_{2i}, \] for some other error vector $u_2$. Hence, adding back in
the residual, we get that the $i^{th}$ entry (for $i < n$, where
$\sigma_i = a$ for some $a \in \Sigma$) gives us
\begin{gather}
  (I + f^{attn}_1)(\bX_{0i}) \approx \bX_{0i} + \te_a + \he_a = \be_a
  + \be_a' + \te_a + \he_a.
\end{gather}
Here and henceforth, we will use the ``$\approx$'' to hide error vectors of length $O(1/n^2)$.

Now a similar analysis shows that for position $i > n$ (such that
$\bX_{0i} = a$), setting $C = 3$ then most of $a$'s attention
(in the first head) is on the $\bot$ delimiter, and hence
\begin{gather}
  (I + f^{attn}_1)(\bX_{0i}) \approx (\be_a + \be_a') + \te_\bot +
  \he_a'. \label{eq:4}
\end{gather}
Finally, the $\bot$ delimiter pays most of its attention to itself in
first head, whereas in the second head it pays attention to all the
tokens (weighted by the $e^{\tau \gamma_b}$ multipliers). Defining
$\alpha_b := e^{\tau \gamma_b}$, we get
\begin{align}
  (I + f^{attn}_1)(\bX_{0i}) \approx (\be_\bot +
  \be_\bot') +  \te_\bot + \frac{\sum_{b}
  \alpha_b n_{b,[0,n)} \, \he_b'}{\sum_{b} \alpha_b
  n_{b,[0,n)}+1}. \label{eq:3}
\end{align}
Since we have identified the symbols of $\Sigma$ with
$\{1,2,\ldots,q\}$, we can define $\gamma_b = (q-b+1)$, and hence $\ln \alpha_b := 3(q-b+1)\ln n$.
Since the $\alpha_b$ values decrease rapidly as $b$ increases, the
fraction on the right assigns most of its weight to vector $\he_b'$
corresponding to the minimum element in the input $\sigma_{[0,n)}$. In
other words, we get
\begin{align}
  (I + f^{attn}_1)(\bX_{0i}) \approx (\be_\bot +
  \be_\bot') +  \te_\bot + \min_{b \in \bs_{[0,n-1]}} \he_b'.
\end{align}

Let us denote the output of the first self-attention layer by $\bY_1$; i.e.,
\[ \bY_1 := (I + f^{attn}_1)(\bX_{0}). \]

\subsubsection{Block \#1: MLP Layer}

Recall that the MLP layer is applied to each embedding separately, and
there is no interaction between the embeddings of different tokens.
The first MLP layer has two goals:
\begin{enumerate}
\item The first goal is to convert the $\te_\bot$ vector in embedding
  of some post-delimiter $a$ to the corresponding $-\te_a$.  To this
  end, the
  \begin{gather}
    f^{mlp}_{1,1}(x) := \sum_{b \in \Sigma} \sigma( \ip{x, \be_b} +
    \ip{x, \te_\bot} - 1) (- \te_b' - \te_\bot).
  \end{gather}
  Recall that $\sigma(z) := \max(0,z)$ is the ReLU function.
 
\item The second goal is to shift the coordinates of the $\te_a$
  vectors, so that they appear in the second half of the coordinates
  instead of the first. For this we use
  \begin{gather}
    f^{mlp}_{1,2}(x) := \sum_{b \in \Sigma} \bigg((\sigma(\ip{x,\te_b}
    - \sigma(\ip{x,-\te_b}) \cdot (\te_b' - \te_b) \bigg)
  \end{gather}
\end{enumerate}
Finally, $f^{mlp}_1(x) := f^{mlp}_{1,1}(x) + f^{mlp}_{1,2}(x)$. This
gives us the output of the first attention block:
\[ \bX_1 := (I + f^{mlp}_1)(\bY_{1}). \] As mentioned above, we do not use
the layer normalization in this construction, so this $\bX_1$ is now fed
to the second attention block.

To summarize,
\begin{gather*}
  \bX_{1i} = (I + f^{mlp}_1)(\bY_{1i}) = \begin{cases}
    (\be_a + \be_a') + \te_a' + \he_a' + \bu_{3i} \qquad  \qquad &
    \text{for $i<n$} \\
    (\be_a + \be_a') - \te_a' + \he_a' + \bu_{3i} & \text{for $i > n$, and} \\
    (\be_\bot + \be_\bot') + \te_\bot + \min_{b \in \bs_{[0,n-1]}} \he_b' +
    \bu_{3i} &
    \text{for $i=n$.}  
  \end{cases} 
\end{gather*}
The error vectors $\bu_{3i}$ above have magnitude
$O(\nf1{n^2})$.

\subsection{Block \#2}

The second block now ensures that the $\bot$ token predicts the minimum element, whereas each other token predicts its successor. The non-trivial part of this construction arises from duplicates in the input, so that each symbol $a \in \Sigma$ has to infer whether the number of copies of $a$ already output equals the number in the input part of $\bs$, and accordingly predict whether to output another $a$ or the successor to $a$. (Observe that this is an \emph{ordinal} concept, and not a \emph{cardinal} one: the network does not need the actual count of the $a$'s that have been output, but to just know whether the number of $a$'s output is strictly less than the number in the input.)

\subsubsection{Block \#2: Self-Attention Layer}

The self-attention layer of the second block again has two attention heads: 
\begin{itemize}
\item Attention Head \#1, which again operates on a subspace
  containing the ``unprimed'' vectors: 
  \begin{alignat}{3}
    Q\be_a &= \be_a & \qquad \qquad K\be_a &= \be_a &\qquad
    \qquad V\te_a &= \he_a \\
    Q\be_\bot &= \be_\bot & \qquad \qquad K\be_\bot &= \be_\bot & \qquad \qquad
    V\be_\bot &= 0.
  \end{alignat}
  Again, the matrices $Q,K,V$ map all vectors orthogonal to the
  specified vectors to zero. Recall that the $\tau$ parameter is the
  softmax operator is set to $3 \ln n$.

\item Attention Head \#2, which operates on the primed vectors: define
  \begin{alignat}{3}
    Q\be_a' &= \sum_{b > a} \gamma_b \be_b' & \qquad \qquad K\be_a' &= \be_a'  &\qquad
    \qquad V\be_a' &= \eps \he_a'  \\
    Q\be_\bot' &= \be_\bot' &  \qquad \qquad K\be_\bot'&= \be_\bot'
    & V\be_\bot'&= 0.
  \end{alignat}
\end{itemize}
(We will fix the value of $\eps > 0$ below.)

Since we are at the final block, we are no longer concerned with the
part of the input in $\s_{[0:n-1]}$, and hence focus on positions $n$
and beyond. The $\bot$ delimiter at position $n$ primarily pays
attention to itself in both attention heads, since $\tau$ is $\Omega(\log
n)$. This means it remains unchanged, and
\begin{gather}
  (I + f^{attn}_2)(\bX_{1n}) = (\be_\bot + \be_\bot') +
  \te_\bot + \min_{b \in \bs_{[0,n-1]}} \he_b' + \bu_{4n}, 
\end{gather}
where the new error vector $\bu_{4n}$ is still of the order $O(1/n^2)$.

Next, consider any position $i > n$, such that $\sigma_i = a$ for some
$a \in \Sigma$. The first attention head gives
\begin{gather}
  \frac{\sum_{j \leq i: \sigma_j = a} e^C \; V(\bX_{1j}) + \sum_{j \leq
      i: \sigma_j \neq a} V(\bX_{1j})}{e^C \, n_{a,[0.i]} +
    (n-n_{a,[0,i]})} = 
  \frac{e^C (n_{a,[0,n]} - n_{a,[n+1,i]}) \,\he_a + \sum_{j \leq
      i: \sigma_j \neq a} \he_{\sigma_j}}{e^C \, n_{a,[0.i]} +
    (n-n_{a,[0,i]})} 
\end{gather}
Again, since $C = \Omega(\ln n)$, this is approximately
\begin{gather}
  (n_{a,[0,n]} - n_{a,[n+1,i]})\, \he_a + \bu_{4i}, \label{eq:1}
\end{gather}
where the error vector $\bu_{4i}$ has tiny norm $O(1/n^2)$. The second
attention head for the same symbol $\s_i$ gives
\begin{gather}
  \frac{\sum_{b > a} \alpha_b n_{b,[0,i]} \, \eps \,\he_b' + \sum_{b \leq a} n_{b,[0,i]}\, \eps  \,\he_b'}{\sum_{b > a} \alpha_b n_{b,[0,i]} + \sum_{b \leq a} n_{b,[0,i]}}.
\end{gather}
Recall that $\alpha_b = e^{\tau \gamma_b} = n^{3 (q-b+1)}$. If the
symbol $a$ is not the largest symbol of the input (so that other
symbols $b > a$ follow it in the input), this expression is
$\eps(\min_{b > a} \he_b' + \bu_{4i}')$, with the error vector
$\bu_{4i}'$ having a tiny norm compared to $\min_{b > a} \he_b'$. As
before, we define
\[ \bY_2 := (I + f^{attn}_2)(\bX_1) \]
to be the outcome of this self-attention layer.

Let $\widehat{P}$ be the projection of these embeddings on the subspace
spanned by the ``hatted'' vectors $\{\he_a, \he_a'\}_{a \in
  \Sigma}$. Then
\begin{align}
  \widehat{P} \bY_{2i} &= \widehat{P} (I + f^{attn}_2)(\bY_{1i})
                         \notag \\ &= \begin{cases}
    \min_{b \in \bs_{[0,n-1]}} \he_b' +
    \bu_{5i} &     \text{for $i=n$, and}  \\
    (n_{a,[0,n]} - n_{a,[n+1,i]})\, \he_a + \he_a' +   \eps \frac{\sum_{b > a}
      \alpha_b n_{b,[0,i]} \,\he_b' + \sum_{b \leq a}
      n_{b,[0,i]}\,\he_b'}{\sum_{b > a} \alpha_b n_{b,[0,i]}
      + \sum_{b \leq a} n_{b,[0,i]}} + \bu_{5i} & \text{for $i > n$.}
  \end{cases} \label{eq:5}
\end{align}

\subsubsection{Block \#2: MLP Layer}

The final MLP layer of the second and final block has a simple task:
\begin{gather}
  f^{mlp}_2(x) := \sum_{b \in \Sigma} \sigma( \ip{x, -\te_b}) \cdot
  (- \,\he_b').
\end{gather}
This has the effect of adding in $(-\he_a')$ to any post-delimiter
$a$, and hence``nullifying'' the $\he_a'$. The net effect (again seen
after projection onto the hatted subspace) is
\begin{align*}
  \widehat{P} \bX_{2i} &= \widehat{P} (I + f^{mlp}_2)(\bY_{2i}) \\ &= \begin{cases}
    \min_{b \in \bs_{[0,n-1]}} \he_b' +
    \bu_{5i} &     \text{for $i=n$, and}  \\
    (n_{a,[0,n]} - n_{a,[n+1,i]})\, \he_a + \eps \frac{\sum_{b > a}
      \alpha_b n_{b,[0,i]} \,\he_b' + \sum_{b \leq a}
      n_{b,[0,i]}\,\he_b'}{\sum_{b > a} \alpha_b n_{b,[0,i]}
      + \sum_{b \leq a} n_{b,[0,i]}} + \bu_{5i} & \text{for $i > n$.}
  \end{cases} 
\end{align*}

\subsection{The Decoding Layer}

\begin{proof}[Proof of Theorem~\ref{thm:main}]

The decoding (or unembedding) layer outputs the element $a$
for which the vector $\he_a + \he_a'$ has the largest inner product
with the current embedding. In other words, $\sigma_i$ predicts
\begin{gather}
  \arg\max_{a \in \Sigma} ~~~\ip{ \bX_{2i}, \he_a + \he_a'}. \label{eq:2}
\end{gather}
From the above construction we have the following properties that establish the correctness of the network:
\begin{enumerate}
\item For the delimiter at position $i = n$, this is simply the
  minimum element from $\bs_{[0,n-1]}$.
\item For any other location $i > n$ with $\sigma_i = a$, there are two
  cases:
  \begin{enumerate}
  \item Suppose there are multiple copies of $a$ in the input $\bs_{[0,n-1]}$, and not
    all of them have been output yet. This means
    $n_{a,[0,n]} > n_{a,[n+1,i]}$, and hence the maximum
    in~(\ref{eq:2}) is achieved by $a$ itself, as long as
    $\eps \leq \nf12$, say. This results in predicting and outputting
    another copy of $a$.
  \item Else suppose the number of copies of $a$ in the output already
    equals that in the input. In this case, the argmax in~(\ref{eq:2})
    is achieved at the smallest element $b \in \bs_{[0,n-1]}$ that is
    larger than $a$; this is indeed the correct ``successor'' element
    for $a$ to predict. One exception is when $i = 2n$, but then we do not need
    any further predictions.
  \end{enumerate}
\end{enumerate}
Here we have crucially used that the
maximizing vector has norm at least a constant, which means that the
error vectors of length $O(1/n^2)$ do not alter the result. 
\end{proof}

\subsection{The Layer Normalization}
\label{sec:layer-normalization}

The construction above (using $d = O(|\Sigma|)$ coordinates) did not
use the layer normalization operation; however, we can convert it to
incorporate this operation as well. Recall that layer normalization
operates on the embedding of each token independently: (i) given a
vector $x \in \RR^d$, it subtracts the mean $\mu_x := \frac1d \|x\|_1$
from each coordinate, and then (ii)~renormalizes it to have squared
norm $d$.

We take the above construction using vectors $x \in \RR^d$ and extend
it by adding $d$ new coordinates and appending an analogous
construction using the negative of these vectors. The new embedding
$\bar{x}$ has mean $\mu_{\bar{x}} = 0$, and hence the step~(i) of
layer normalization does not change anything.

This means that at the end of the first block, each of the embeddings
in our construction have squared length $\approx 4$. This means the
renormalization only changes the magnitude of the embeddings, but
their relative sizes remain the same. Consequently, the computations
in the second block remain unchanged. The final layer normalization
again shifts and renormalizes the embedding, but this does not change
the outcome.

\subsection{Agreement with Experimental Results}

\begin{enumerate}
\item Consider the embedding after the first self-attention layer: decoding
  this embedding of $\bot$ (given in (\ref{eq:3}) gives us the minimum
  element, whereas decoding the embedding of any $\sigma_i$ for
  $i > n$ (as given in (\ref{eq:4})) gives us the element $\sigma_i$
  itself. This ``min/copy'' behavior can be observed in the
  experimental results.

\item After the second self-attention layer, consider the last occurrence
  of any symbol $a$ in the output (say at some position $i > n$, as
  given in (\ref{eq:5})): since $n_{a,[0,n]} = n_{a,[n+1,i]}$ by our
  assumption, decoding this embedding puts most of its mass along $a$
  (due to $\he_a'$) and its successor (due to
  $\eps \frac{\sum_{b > a} \alpha_b n_{b,[0,i]} \,\he_b' + \sum_{b
      \leq a} n_{b,[0,i]}\,\he_b'}{\sum_{b > a} \alpha_b n_{b,[0,i]} +
    \sum_{b \leq a} n_{b,[0,i]}}$). Again, this ``Identity+Successor''
  behavior shows up in the experiments.

\item Finally, the last MLP layer nullifies the mass on the $a$ token
  itself, thereby leaving most of the mass on the successor. This
  aspect also shows up in the experiments.
\end{enumerate}

\section{Experimental Settings and Hyperparameters}
\label{sec:hparam}

\noindent \textbf{Sorting Dataset construction.}
For the sorting problem, we construct the data set for the main task
by sampling a length in $\{2,3, \ldots, 20\}$ from a skewed
distribution. For a given length, we sample an input sequence by
independently drawing random numbers in $\{1,100\}$ uniformly at
random with replacement, for each input position. The skewed length
distribution places $80\%$ of the probability mass equally on lengths
in $\{2,3,4,5\}$ and the remaining $20\%$ uniformly on lengths in
$\{6, 7, \ldots, 20\}$.

We also train models on variants of the above dataset that contain
non-trivial repetitions. These datasets are created by first picking a
length $\ell$ from the same skewed distribution. With probability
$0.9$, we follow the same procedure as above for creating the input
sequence. With the remaining probability $0.1$, we pick $\frac \ell 2$
numbers uniformly at random from $\{1,2,\ldots, 100\}$ (without
replacement) and create the input sequence by independently drawing
numbers from this set (uniformly, with repetitions) for each position
in the input sequence.

The training dataset for the \emph{successor hint} is created in the
same manner as above: having picked the input sequence, we pick a
position uniformly at random and use the corresponding number to
create the successor hint. For creating the \emph{count hint} dataset
we again pick the length of the sequence from the skewed
distribution. Then we pick two numbers $a,b$ at random (without
replacement) from $\{1, 2, \ldots, 100\}$. Finally, with probability
$0.5$ we repeat them the same number of times, and with the remaining
probability either $a$ or $b$ is chosen (equally) to be the
under-represented number. The amount of under-representation is chosen
uniformly among the valid choices, but restricted to be at most
$5$. Similarly, for the \emph{fill task}, we choose the length $\ell$
from the skewed distribution and choose to repeat the element some
number of times uniformly chosen from
$\{1, 2, \ldots, \lfloor \ell/2 \rfloor\}$. To construct the test set
for each sequence length, we sample $100k$ examples uniformly at
random with replacement.

All our models for the sorting network use the architecture and
parameters detailed in \Cref{tbl:sorting-hparam}. We do not use
position embeddings in our architectures, as causal decoder-only
models are not permutation invariant. In all our experiments with
using fixed and learned positional embeddings, we observed comparable
or even worse performance compared to models using no positional
embedding.

\begin{table}[h]
    \centering
    \begin{tabular}{|c|c|}
    \hline
         Parameter & Value \\\hline
         Embedding size $d$ & $1024$\\
         Vocabulary size $q$ & $103$ \\
         Position embedding type & None\\
         \# Attention heads $h$ & $16$\\
         MLP inner dimensionality $d'$ & 2048\\
         Sequence length & $512$\\
         Base learning rate & 1e-5\\
         Optimizer & Adam\\
         LR warmup & Linear for $10$ epochs\\
         LR decay schedule & Cosine, one cycle with default parameters\\
         Dropout & None\\
         Activation & GELU\\
         \hline
    \end{tabular}
    \caption{Hyperparameters for the sorting task.    \label{tbl:sorting-hparam}
}
\end{table}

\noindent \textbf{Increment Dataset Construction.} For the problem of
incrementing a positive integer, we construct the data set for the
main task by sampling a length in $\{2,3, \ldots, 10\}$ from a skewed
distribution. The distribution places $80\%$ mass equally on lengths
$\{2,3,4\}$ and the remaining $20\%$ equally on lengths
$\{5, \ldots, 10\}$. For each length $\ell$, with probability $0.9$ we
sample input position $0$ (the most significant digit) uniformly at
random from $\{1, \ldots, 9\}$ and the remaining positions uniformly
at random (with replacement) from $\{0, \ldots, 9\}$. With the
remaining probability of $0.1$. we randomly replace the last $k$
positions with the digit $9$, where $k$ is chosen uniformly from
$\{1,2,\ldots, \ell\}$. Our test dataset for each length consists of
$100k$ numbers chosen uniformly at random. For the increment task we
use the parameters detailed in \Cref{tbl:increment-hparam} and
always train depth-$4$ models.

\begin{table}[h]
    \centering
    \begin{tabular}{|c|c|}
    \hline
         Parameter & Value \\\hline
         Embedding size $d$ & $1024$\\
         Vocabulary size $q$ & $14$ \\
         Position embedding type & None\\
         \# Attention heads $h$ & $16$\\
         MLP inner dimensionality $d'$ & 2048\\
         Sequence length & $512$\\
         Base learning rate & 1e-5\\
         Optimizer & Adam\\
         LR warmup & Linear for $10$ epochs\\
         LR decay schedule & Cosine, one cycle with default parameters\\
         Dropout & None\\
         Activation & GELU\\
         \hline
    \end{tabular}
    \caption{Hyperparameters for the increment task.    \label{tbl:increment-hparam}}
\end{table}

\end{document}